\newcommand{\A}{\ensuremath{\mathbf{A}}}
\newcommand{\B}{\ensuremath{\mathbf{B}}}
\newcommand{\F}{\ensuremath{\mathbf{F}}}
\newcommand{\G}{\ensuremath{\mathbf{G}}}
\newcommand{\I}{\ensuremath{\mathbf{I}}}
\newcommand{\OO}{\ensuremath{\mathbf{O}}}
\newcommand{\PP}{\ensuremath{\mathbf{P}}}
\newcommand{\Q}{\ensuremath{\mathbf{Q}}}
\newcommand{\U}{\ensuremath{\mathbf{U}}}
\newcommand{\V}{\ensuremath{\mathbf{V}}}
\newcommand{\W}{\ensuremath{\mathbf{W}}}
\newcommand{\X}{\ensuremath{\mathbf{X}}}
\newcommand{\Y}{\ensuremath{\mathbf{Y}}}
\newcommand{\f}{\ensuremath{\mathbf{f}}}
\newcommand{\g}{\ensuremath{\mathbf{g}}}
\newcommand{\uu}{\ensuremath{\mathbf{u}}}
\newcommand{\vv}{\ensuremath{\mathbf{v}}}
\newcommand{\x}{\ensuremath{\mathbf{x}}}
\newcommand{\y}{\ensuremath{\mathbf{y}}}
\newcommand{\1}{\ensuremath{\mathbf{1}}}
\newcommand{\bSigma}{\ensuremath{\boldsymbol{\Sigma}}}
\newcommand{\bbR}{\ensuremath{\mathbb{R}}}
\newcommand{\calO}{\ensuremath{\mathcal{O}}}
\newcommand{\abs}[1]{\left\lvert#1\right\rvert}
\newcommand{\norm}[1]{\left\lVert#1\right\rVert}
\newcommand{\caja}[4][1]{{%
    \renewcommand{\arraystretch}{#1}%
    \begin{tabular}[#2]{@{}#3@{}}%
      #4%
    \end{tabular}%
    }}
\newcommand{\traceop}{\operatorname{tr}}
\newcommand{\trace}[1]{\ensuremath{\traceop\left(#1\right)}}
\newcommand{\Sfg}{\bSigma_{fg}}
\newcommand{\Sff}{\bSigma_{ff}}
\newcommand{\Sgg}{\bSigma_{gg}}
\newcommand{\Tfg}{\tilde{\bSigma}_{fg}}
\newcommand{\tU}{\tilde{\U}}
\newcommand{\tV}{\tilde{\V}}
\newcommand{\tf}{\tilde{\f}}
\newcommand{\tg}{\tilde{\g}}
\newcommand{\Stf}{\bSigma_{\tf\tf}}
\newcommand{\Stg}{\bSigma_{\tg\tg}}
\newtheorem{theorem}{\textbf{Theorem}}
\title{\LARGE \bf
Stochastic Optimization for Deep CCA 
\\ via Nonlinear Orthogonal Iterations
}
\author{Weiran Wang$^1$, Raman Arora$^2$, Karen Livescu$^1$ and Nathan Srebro$^1$\\
$^1$Toyota Technological Institute at Chicago \hspace{1em} 6045 S. Kenwood Ave., Chicago, IL 60637 \\
$^2$Johns Hopkins University \hspace{1em} 3400 N. Charles St., Baltimore, MD 21218 \\
 Email: \texttt{$^1$\{weiranwang,klivescu,nati\}@ttic.edu, $^2$arora@cs.jhu.edu}
\thanks{This research was supported by the NSF grants IIS-1546482 and IIS-1321015. The opinions expressed in this work are those of the authors and do not necessarily reflect the views of the funding agency.}}
\begin{document}

\maketitle
\thispagestyle{empty}
\pagestyle{empty}

\begin{abstract}


Deep CCA is a recently proposed deep neural network extension to the traditional canonical correlation analysis (CCA), and has been 
successful for multi-view representation learning in several domains.  However, stochastic optimization of the deep CCA objective is not straightforward, because it does not decouple over training examples. Previous optimizers for deep CCA are either 
batch-based algorithms or stochastic optimization using large minibatches, which can have high memory consumption. In this paper, we tackle the problem of stochastic optimization for deep CCA with small minibatches, based on an iterative solution to the CCA objective, and show that we can achieve as good performance as previous optimizers and thus alleviate the memory requirement.  

\end{abstract}

\section{Introduction}
\label{s:intro}

Stochastic gradient descent (SGD) is a fundamental and popular optimization method for machine learning problems~\cite{Bottou91a,Lecun_98b,Bottou04a,Zhang04b,Bertsek11a}. SGD is particularly well-suited for large-scale machine learning problems because it is extremely simple and easy to implement, it often achieves better generalization (test) performance (which is the focus of machine learning research) than sophisticated batch algorithms, and it usually achieves large error reduction very quickly in a small number of passes over the training set~\cite{BottouBousquet08a}. One intuitive explanation for the empirical success of stochastic gradient descent for large data is that it makes better use of data redundancy, with an extreme example given by \cite{Lecun_98b}:  If the training set consists of $10$ copies of the same set of examples, then computing an estimate of the gradient over one single copy is $10$ times more efficient than computing the full gradient over the entire training set, while achieving the same optimization progress in the following gradient descent step.

At the same time, ``multi-view'' data are becoming increasingly available, and methods based on canonical correlation analysis (CCA)~\cite{Hotell36a} that use such data to learn representations (features) form an active research area. The views can be multiple measurement modalities, such as simultaneously recorded audio + video~\cite{Kidron_05a,Chaudh_09a}, audio + articulation~\cite{AroraLivesc13a}, images + text~\cite{Hardoon_04a,SocherLi10a,Hodosh_13a}, or parallel text in two languages~\cite{Vinokour_03a,Haghig_08a,Chandar_14a,FaruquiDyer14a,Lu_15a}, but may also be different information extracted from the same source, such as words + context~\cite{Pennin_14a} or document text + text of inbound hyperlinks~\cite{BickelScheff04a}. The presence of multiple information sources presents an opportunity to learn better representations (features) by analyzing multiple views simultaneously. Among various multi-view learning approaches, the recently proposed deep canonical correlation analysis \cite{Andrew_13a}, which extends traditional CCA with deep neural networks (DNNs), has been shown to be advantageous over previous methods in several domains \cite{Wang_15a,Wang_15b,YanMikolaj15a}, and scales to large data better than its nonparametric counterpart kernel CCA~\cite{LaiFyfe00a,BachJordan02a,Hardoon_04a}.

In contrast with most DNN-based methods, the objective of deep CCA couples together all of the training examples due to its whitening constraint, making stochastic optimization challenging. Previous optimizers for this model are 
batch-based, e.g., limited-memory BFGS (L-BFGS) \cite{Nocedal80a} as in \cite{Andrew_13a}, or stochastic optimization with large minibatches~\cite{Wang_15a}, because it is difficult to obtain an accurate estimate of the gradient with a small subset of the training examples (again due to the whitening constraint). As a result, these approaches have high memory complexity and may not be practical for large DNN models with hundreds of millions of weight parameters (common with web-scale data~\cite{Dean_12a}), or if one would like to run the training procedure on GPUs which are equipped with faster but smaller (more expensive) memory than CPUs.  In such cases there is not enough memory to save all intermediate hidden activations of the batch/large minibatch used in error backpropagation. 

In this paper, we tackle this problem with two key ideas. First, we reformulate the CCA solution with orthogonal iterations, and embed the DNN parameter training in the orthogonal iterations with a nonlinear least squares regression objective, which naturally decouples over training examples. Second, we use adaptive estimates of the covariances used by the CCA whitening constraints and carry out whitening \emph{only} for the minibatch used at each step to obtain training signals for the DNNs. This results in a stochastic optimization algorithm that can operate on small minibatches and thus consume little memory. Empirically, the new stochastic optimization algorithm performs as well as previous optimizers in terms of convergence speed, even when using small minibatches with which the previous stochastic approach makes no training progress.

In the following sections, we briefly introduce deep CCA and discuss the difficulties in training it (Section~\ref{s:dcca}); motivate and propose our new algorithm (Section~\ref{s:algorithm}); describe related work (Section~\ref{s:related}); and present experimental results comparing different optimizers (Section~\ref{s:experiments}).

\section{Deep CCA}
\label{s:dcca}

\noindent\textbf{Notation} In the multi-view feature learning setting, we have access to paired observations from two views, denoted ${(\x_1,\y_1),\dots,(\x_N,\y_N)}$, where $N$ is the training set size, $\x_i\in \bbR^{D_x}$ and $\y_i\in \bbR^{D_y}$ for $i=1,\dots,N$. We also denote the data matrices for View 1 and View 2 $\X=[\x_1,\dots,\x_N]$ and $\Y=[\y_1,\dots,\y_N]$, respectively. We use bold-face letters, e.g.~$\f$, to denote mappings implemented by DNNs, with a corresponding set of learnable parameters, denoted, e.g., $\W_\f$. The dimensionality of the 
learned features is denoted $L$.

\begin{figure}
  \centering
  \psfrag{x}[][]{$\x$}
  \psfrag{y}[][]{$\y$}
  \psfrag{v1}[][][0.8]{View 1}
  \psfrag{v2}[][][0.8][90]{View 2}
  \psfrag{U}[][]{$\U$}
  \psfrag{V}[][]{$\V$}
  \psfrag{f}[][]{$\f$}
  \psfrag{g}[][]{$\g$}
  \includegraphics[width=0.55\linewidth]{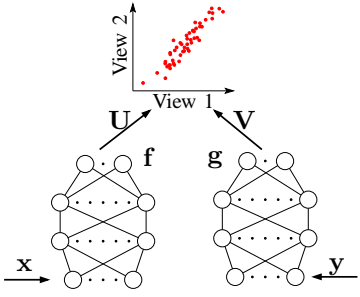}
  \caption{Schematic diagram of deep canonical correlation analysis.}
  \label{f:dcca}
\end{figure}

Deep CCA (DCCA)~\cite{Andrew_13a} extends (linear) CCA~\cite{Hotell36a} by extracting $d_x$- and $d_y$-dimensional nonlinear features with two DNNs $\f$ and $\g$ for views 1 and 2 respectively, such that the canonical correlation (measured by CCA) between the DNN outputs is maximized, as illustrated in Fig.~\ref{f:dcca}. The goal of the final CCA is to find $L \le \min(d_x,d_y)$ pairs of linear projection vectors $\U \in \bbR^{d_x \times L} $ and $\V \in \bbR^{d_y \times L}$ such that the projections of each view (a.k.a.~canonical variables,~\cite{Hotell36a}) are maximally correlated with their counterparts in the other view, constrained such that the dimensions in the representation are uncorrelated with each other.
Formally, the DCCA objective can be written as\footnote{In this paper, we use the scaled covariance matrices (scaled by $N$) so that the dimensions of the projection are orthonormal and comply with the custom of orthogonal iterations.}
\begin{gather} \label{e:dcca}
  \max_{\W_\f, \W_\g, \U, \V} \quad \trace{\U^\top \F  \G^\top \V} \\
  \text{s.t.} \quad \U^\top \F \F^\top \U = \V^\top \G \G^\top \V = \I, \nonumber
\end{gather}
where $\F=\f(\X)=[\f(\x_1),\dots,\f(\x_N)] \in \bbR^{d_x \times N}$ and $\G=\g(\Y)=[\g(\y_1),\dots,\g(\y_N)] \in \bbR^{d_y \times N}$. We assume that $\F$ and $\G$ are centered at the origin for notational simplicity; if they are not, we can center them as a pre-processing operation. Notice that if we use the original input data without further feature extraction, i.e.~$\F=\X$ and $\G=\Y$, then we recover the CCA objective. 
In DCCA, the final features (projections) are 
\begin{gather}\label{e:concat}
  \tf(\x)=\U^\top \f(\x) \qquad \text{and} \qquad \tg(\y)=\V^\top \g(\y).
\end{gather}
We observe that the last CCA step with linear projection mappings $\U$ and $\V$ can be considered as adding a linear layer on top of the feature extraction networks $\f$ and $\g$ respectively. In the following, we sometimes refer to the concatenated networks $\tf$ and $\tg$ as defined in \eqref{e:concat}, with $\W_{\tf}=\{\W_\f,\U\}$ and $\W_{\tg}=\{\W_\g,\V\}$.
\footnote{In principle there is no need for the final linear layer; we could define DCCA such that the correlation objective and constraints are imposed on the final nonlinear layer.  However, the linearity of the final layer is crucial for algorithmic implementations such as ours.}

Let $\Sfg= \F \G^\top$, $\Sff=\F \F^\top$ and $\Sgg=\G \G^\top$ be the (scaled) cross- and auto-covariance matrices of the feature-mapped data in the two views. It is well-known that, when $\f$ and $\g$ are fixed, the last CCA step in \eqref{e:dcca} has a closed form solution as follows. Define $\Tfg=\Sff^{-\frac{1}{2}} \Sfg \Sgg^{-\frac{1}{2}}$, and let $\Tfg=\tU \Lambda \tV^\top $ be its rank-L singular value decomposition (SVD), where $\Lambda$ contains the singular values $\sigma_1 \ge \dots \ge \sigma_L \ge 0$ on its diagonal. Then the optimum of \eqref{e:dcca} is achieved by $(\U,\V)=(\Sff^{-\frac{1}{2}} \tU, \Sgg^{-\frac{1}{2}} \tV )$, and the optimal objective value (the total canonical correlation) is $\sum_{j=1}^L \sigma_j$. By switching $\max(\cdot)$ with $- \min -(\cdot)$, and adding $1/2$ times the constraints, it is straightforward to show that \eqref{e:dcca} is equivalent to the following:
\begin{gather}\label{e:dcca2}
  \min_{\W_\f, \W_\g, \U, \V} \quad \frac{1}{2} \norm{\U^\top \F -  \V^\top \G}^2_F \\
  \qquad \text{s.t.} \quad  (\U^\top \F) (\U^\top \F)^\top = (\V^\top \G) (\V^\top \G)^\top = \I. \nonumber
\end{gather}
In other words, CCA minimizes the squared difference between the projections of the two views, subject to the whitening constraints. This alternative formulation of CCA will also shed light on our proposed algorithm for DCCA. 

The DCCA objective \eqref{e:dcca} differs from typical DNN regression or classification training objectives.  Typically, the objectives are unconstrained and can be written as the expectation (or sum) of error functions (e.g., squared loss or cross entropy) incurred at each training example.  This property naturally suggests stochastic gradient descent (SGD) for optimization, where one iteratively generates random unbiased estimates of the gradient based on one or a few training examples (a minibatch) and takes a small step in the opposite direction. However, the objective in \eqref{e:dcca} can not be written as an unconstrained sum of errors. The difficulty lies in the fact that the training examples are coupled through the auto-covariance matrices (in the constraints), which can not be reliably estimated with only a small amount of data. 

When introducing deep CCA, \cite{Andrew_13a} used the L-BFGS algorithm for optimization. To compute the gradients of the objective with respect to $(\W_\f,\W_\g)$, one first computes the gradients\footnote{Technically we are computing subgradients as the ``sum of singular values'' (trace norm) is not a differentiable function of the matrix.} with respect to $(\F,\G)$ as
\begin{align}\label{e:gradient}
  \frac{\partial \sum_{j=1}^L \sigma_j} {\partial \F} &= 2\Delta_{ff} \F + \Delta_{fg} \G, \\
  \text{with}\qquad \Delta_{ff} & = -\frac{1}{2} \bSigma_{ff}^{-1/2} \tilde{\U} \Lambda \tilde{\U}^\top \bSigma_{ff}^{-1/2} \nonumber \\
  \Delta_{fg} & = \bSigma_{ff}^{-1/2} \tilde{\U} \tilde{\V}^\top \bSigma_{gg}^{-1/2} \nonumber
\end{align}
where $\Tfg=\tU\Lambda\tV^\top$ is the SVD of $\Tfg$ as in the closed-form solution to CCA, and $\partial \sum_{j=1}^L \sigma_j / \partial \G$ has an analogous expression. One can then compute the gradients with respect to $\W_\f$ and $\W_\g$ via the standard backpropagation procedure~\cite{Rumelh_86c}. From the gradient formulas, it is clear that the key to optimizing DCCA is the SVD of $\Tfg$; various nonlinear optimization techniques can be used here once the gradient is computed. In practice, however, batch optimization is undesirable for applications with large training sets or large DNN architectures, as each gradient step computed on the entire training set can be 
expensive in both memory and time.

Later, it was observed by \cite{Wang_15a} that stochastic optimization still works well even for the DCCA objective, as long as larger minibatches are used to estimate the covariances and $\Tfg$ when computing the gradient with \eqref{e:gradient}. More precisely, the authors find that learning plateaus at a poor objective value if the minibatch is too small, but fast convergence and better generalization than batch algorithms can be obtained once the minibatch size is larger than some threshold, presumably because a large minibatch contains enough information to estimate the covariances and therefore the gradient accurately enough (the threshold of minibatch size varies for different datasets because they have different levels of data redundancy). 
Theoretically, the necessity of using large minibatches in this approach 
can also be established. Let the empirical estimate of $\Tfg$ using a minibatch of $n$ samples be $\hat{\bSigma}_{fg}^{(n)}$. 
It can be shown that the expectation of $\hat{\bSigma}_{fg}^{(n)}$ does not equal the true $\Tfg$ computed using the entire dataset, mainly due to the nonlinearities in the matrix inversion and multiplication operations in computing $\Tfg$, and the nonlinearity in the ``sum of singular values'' 
(trace norm) of $\Tfg$; moreover, the spectral norm of the error $\norm{\hat{\bSigma}_{fg}^{(n)} - \Tfg}$ decays slowly 
as $\frac{1}{\sqrt{n}}$. Consequently, the gradient estimated on a minibatch using \eqref{e:gradient} does not equal the true gradient of the objective in expectation, indicating that the stochastic approach of \cite{Wang_15a} does not qualify as a stochastic gradient descent method for the DCCA objective.

\section{Our algorithm}
\label{s:algorithm}

\subsection{An iterative solution to linear CCA}

\begin{algorithm}[t]
  \caption{CCA projections via alternating least squares.}
  \label{alg:cca-iterative}
  \renewcommand{\algorithmicrequire}{\textbf{Input:}}
  \renewcommand{\algorithmicensure}{\textbf{Output:}}
  \begin{algorithmic}
    \REQUIRE Data matrices $\F\in \bbR^{d_x \times N}$, $\G\in \bbR^{d_y \times N}$. Initialization $\tU_0\in \bbR^{d_x\times L}$ s.t.  $\tU_0^\top \tU_0=\I$.
    \STATE $\A_0 \leftarrow \tU_0^\top \Sff^{-\frac{1}{2}} \F$
    \FOR{$t=1,2,\dots,T$}
    \STATE $\B_t \leftarrow \A_{t-1} \G^\top \left( \G \G^\top \right)^{-1} \G$
    \STATE $\B_{t} \leftarrow \left(\B_{t}\B_{t}^\top \right)^{-\frac{1}{2}} \B_t$
    \STATE $\A_t \leftarrow \B_{t} \F^\top \left( \F \F^\top \right)^{-1} \F$
    \STATE $\A_{t} \leftarrow \left(\A_{t}\A_{t}^\top \right)^{-\frac{1}{2}} \A_t$ 
    \ENDFOR  
    \ENSURE $\A_{T}$/$\B_{T}$ are the CCA projections of view 1/2.
  \end{algorithmic}
\end{algorithm}

Our solution to \eqref{e:dcca} is inspired by the iterative solution for finding the linear CCA projections $(\U^\top \F, \V^\top \G)$ for inputs $(\F, \G)$, as shown in Algorithm~\ref{alg:cca-iterative}. This algorithm computes the top-$L$ singular vectors $(\tU,\tV)$ of $\Tfg$ via orthogonal iterations \cite{GolubLoan96a}. An essentially identical algorithm (named \emph{alternating least squares} for reasons that will soon become evident) appears in \cite[Algorithm 5.2]{GolubZha95a} and according to the authors the idea goes back to J. Von Neumann. A similar algorithm is also recently used by \cite[Algorithm~1]{LuFoster14a} for large scale linear CCA with high-dimensional sparse inputs, although their algorithm does not implement the whitening operations $\A_{t} \leftarrow \left(\A_{t}\A_{t}^\top \right)^{-\frac{1}{2}} \A_t$ and $\B_{t} \leftarrow \left(\B_{t}\B_{t}^\top \right)^{-\frac{1}{2}} \B_t$ or they use the QR decomposition instead. The convergence of Algorithm~\ref{alg:cca-iterative} is characterized by the following theorem, which parallels \cite[Theorem~1]{LuFoster14a}.

\begin{theorem}
  Let the singular values of $\Tfg$ be
  \begin{gather*}
    \sigma_1 \ge \dots \ge \sigma_L > \sigma_{L+1} \ge \dots \ge \sigma_{\min(d_x,d_y)}
  \end{gather*}
  and 
suppose $\tU_0^\top \tU$ is nonsingular. Then the output $(\A_T,\B_T)$ of Algorithm~\ref{alg:cca-iterative} converges to the CCA projections as $T\rightarrow \infty$.
\end{theorem}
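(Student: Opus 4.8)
The plan is to show that, after a change of variables into the whitened coordinate system, Algorithm~\ref{alg:cca-iterative} is exactly the classical orthogonal (subspace) iteration applied to the symmetric positive-semidefinite matrix $\Tfg\Tfg^\top$, and then to invoke the standard convergence theorem for orthogonal iteration \cite{GolubLoan96a}. First I would introduce matrices $\M_t,\N_t$ through
\begin{gather*}
  \A_t = \M_t^\top \Sff^{-\frac{1}{2}} \F, \qquad \B_t = \N_t^\top \Sgg^{-\frac{1}{2}} \G,
\end{gather*}
so that the initialization reads $\M_0 = \tU_0$. The goal is then to track $(\M_t,\N_t)$ and to show $\spanop(\M_t)\to\spanop(\tU)$ and $\spanop(\N_t)\to\spanop(\tV)$, since substituting back recovers $\A_T\to\tU^\top\Sff^{-1/2}\F$ and $\B_T\to\tV^\top\Sgg^{-1/2}\G$, the CCA projections.

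Next I would rewrite each line of the loop in terms of $(\M_t,\N_t)$. Using $\Sfg=\F\G^\top$, $\Sgg=\G\G^\top$ and the definition $\Tfg=\Sff^{-1/2}\Sfg\Sgg^{-1/2}$, the regression step $\B_t\leftarrow\A_{t-1}\G^\top(\G\G^\top)^{-1}\G$ collapses to $\N_t=\Tfg^\top\M_{t-1}$, and symmetrically the step producing $\A_t$ gives $\M_t=\Tfg\N_t$; the data matrices $\F,\G$ factor out cleanly. For the whitening steps, the key computation is that the auto-covariances cancel: with $\B_t=\N_t^\top\Sgg^{-1/2}\G$ one has $\B_t\B_t^\top=\N_t^\top\Sgg^{-1/2}(\G\G^\top)\Sgg^{-1/2}\N_t=\N_t^\top\N_t$, so the operation $\B_t\leftarrow(\B_t\B_t^\top)^{-1/2}\B_t$ is equivalent to replacing $\N_t$ by $\N_t(\N_t^\top\N_t)^{-1/2}$, i.e.\ orthonormalizing the columns of $\N_t$ (and analogously for $\A_t$ and $\M_t$). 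Composing the two half-steps, one full iteration sends $\M_{t-1}$ to an orthonormal basis of $\spanop(\Tfg\Tfg^\top\M_{t-1})$.

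This is precisely orthogonal iteration for $\Tfg\Tfg^\top=\tU\Lambda^2\tU^\top$, whose eigenvalues are $\sigma_1^2\ge\dots\ge\sigma_{\min(d_x,d_y)}^2$ with leading eigenvectors the columns of $\tU$. The hypotheses match exactly: $\sigma_L>\sigma_{L+1}$ yields the spectral gap $\sigma_L^2>\sigma_{L+1}^2$, and the nonsingularity of $\tU_0^\top\tU$ is the standard condition that the starting subspace $\spanop(\M_0)$ is not deficient relative to the target invariant subspace. The classical theorem then gives $\spanop(\M_t)\to\spanop(\tU)$ at the rate $(\sigma_{L+1}/\sigma_L)^{2t}$. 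Since $\sigma_L>0$, the relation $\Tfg^\top\tU=\tV\Lambda$ with $\Lambda$ invertible transfers this to $\spanop(\N_t)\to\spanop(\tV)$, completing the argument.

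I expect the main obstacle to be the reparametrization bookkeeping — in particular verifying that the symmetric-square-root whitening of $\A_t$ and $\B_t$ coincides with orthonormalizing the columns of $\M_t$ and $\N_t$, which hinges on the exact cancellation $\Sgg^{-1/2}(\G\G^\top)\Sgg^{-1/2}=\I$ — and then checking that the stated hypotheses line up precisely with those required by the orthogonal-iteration convergence theorem. One mild subtlety worth flagging: when some of $\sigma_1,\dots,\sigma_L$ coincide, orthogonal iteration converges only to the dominant invariant \emph{subspace}, so the limit should be read up to an orthogonal rotation within degenerate blocks; this is harmless, since any such rotation of $(\tU,\tV)$ yields an equally valid set of CCA projections with the same total correlation $\sum_{j=1}^L\sigma_j$.
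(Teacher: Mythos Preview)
Your proposal is correct and follows essentially the same route as the paper's proof: both introduce a whitened-coordinate reparametrization (your $\M_t,\N_t$; the paper's $\tU_t$ defined via $\A_t=\tU_t^\top\Sff^{-1/2}\F$), verify that the regression and whitening steps collapse to orthogonal iteration on $\Tfg\Tfg^\top$, and then invoke \cite[Theorem~8.2.2]{GolubLoan96a}. Your version is slightly more explicit in naming the change of variables upfront and tracking each half-step separately, whereas the paper unrolls the recursion by induction and identifies the orthonormal $\tU_t$ a posteriori, but the mathematical content is the same.
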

\begin{proof}
  We focus on showing that $\A_T$ converges to the view 1 projection; the proof for $\B_T$ is similar.

  First recall that $\Tfg=\tU \Lambda \tV^\top$ is the rank-$L$ SVD of $\Sff^{-\frac{1}{2}} \Sfg \Sgg^{-\frac{1}{2}}$, and thus $\tU$ contains the top-$L$ eigenvectors of $\Tfg \Tfg^\top = \tU \Lambda^2 \tU^\top$.

  Since the operation $\left(\A \A^\top\right)^{-\frac{1}{2}}\A$ extracts an orthonormal basis of the row space of $\A$, at iteration $t$ we can write 
  \begin{align*}
    \A_{t-1} \G^\top \left( \G \G^\top \right)^{-1} \G & = \PP_t \B_t\\
    \B_{t} \F^\top \left( \F \F^\top \right)^{-1} \F & = \Q_t \A_t
  \end{align*}
  where $\PP_t \in \bbR^{L\times L}$ and $\Q_t \in \bbR^{L\times L}$ are nonsingular coefficient matrices (as the initialization $\tU_0$ is nonsingular) for representing the left-hand side matrices in their row space basis. Combining the above two equations gives the following recursion at iteration $t$:
  \begin{gather*}
    \A_{t-1} \G^\top \left( \G \G^\top \right)^{-1} \G \F^\top \left( \F \F^\top \right)^{-1} \F = \PP_t \Q_t \A_t.
  \end{gather*}
  By induction, it can be shown that by the end of iteration $t$ we have
  \begin{multline*}
    \A_0 \left( \G^\top \left( \G \G^\top \right)^{-1} \G \F^\top \left( \F \F^\top \right)^{-1} \F \right)^t = \OO_t \A_t.
  \end{multline*}
where $\OO_t=\PP_1 \Q_1 \dots \PP_t \Q_t \in \bbR^{L\times L}$ is nonsingular.
  Plugging in the definition of $\A_0$, this equation reduces to
  \begin{gather} \label{e:orth-iteration}
    \tU_0^\top \left(\Tfg \Tfg^\top \right)^t \Sff^{-\frac{1}{2}} \F = \OO_t \A_t.
  \end{gather}
It is then clear that $\A_t$ can be written as 
\begin{gather*}
\A_t = \tU_t^\top \Sff^{-\frac{1}{2}} \F
\end{gather*}
 with 
\begin{gather*}
\tU_t = \left(\Tfg \Tfg^\top \right)^t \tU_0 \OO_t^{-1} \; \in \bbR^{d_x\times L}.
\end{gather*} 
And since $\A_t$ has orthonormal rows, we have
\begin{gather*}
\I=\A_t \A_t^\top = \tU_t^\top \Sff^{-\frac{1}{2}} (\F \F^\top) \Sff^{-\frac{1}{2}}  \tU_t = \tU_t^\top \tU_t,
\end{gather*}
indicating that $\tU_t$ has orthonormal columns.

As a result, we consider the algorithm as working implicitly in the space of $\{ \tU_t\in \bbR^{d_x\times L}, t=0,\dots,T\}$, and have
  \begin{gather} \label{e:orth-iteration}
     (\Tfg \Tfg^\top)^T \tU_0  = \OO_T \tU_T.
  \end{gather}
Following the argument of~\cite[Theorem~8.2.2]{GolubLoan96a}) for orthogonal iterations, under the assumptions of our theorem, the column space of $\tU_T$ converges to that of $\tU$, the top-$L$ eigenvectors of $\Tfg \Tfg^\top$, with a linear convergence rate depending on the ratio $\sigma_{L+1}/\sigma_L$. In view of the relationship between $\tU_T$ and $\A_t$, we conclude that $\A_T$ converges to the view 1 CCA projection as $T\rightarrow \infty$.
\end{proof}

It is interesting to note that, besides the whitening operations $\left(\A_{t}\A_{t}^\top \right)^{-\frac{1}{2}} \A_t$, the other basic operations in each iteration of Algorithm~\ref{alg:cca-iterative} are of the form 
\begin{gather}\label{e:lsq}
  \A_t \leftarrow \B_{t} \F^\top \left( \F \F^\top \right)^{-1} \F
\end{gather}
which is solving a linear least squares (regression) problem with input $\F$ and target output $\B_{t}$ satisfying $\B_{t}\B_{t}^\top=\I$, i.e.,
\begin{gather*}
  \min_{\U_t} \quad \norm{\U_t^\top \F - \B_t}_F^2. 
\end{gather*}
By setting the gradient of this unconstrained objective to zero, we obtain $\U_t=(\F\F^\top)^{-1} \F \B_t^\top$ and so the optimal projection $\U_t^\top \F$ coincides with the update \eqref{e:lsq}.

For \cite{LuFoster14a}, the advantage of the alternating least squares formulation over the exact solution to CCA is that it does not need to form the high-dimensional (nonsparse) matrix $\Tfg$; instead it directly operates on the projections, which are much smaller in size, and one can solve the least squares problems using iterative algorithms that require only sparse matrix-vector multiplications.

\subsection{Extension to DCCA}

Our intuition for adapting Algorithm~\ref{alg:cca-iterative} to DCCA is as follows. During DCCA optimization, the DNN weights $(\W_\f,\W_\g)$ are updated frequently and thus the outputs $\left( \f(\X),\g(\Y) \right)$, which are also the inputs to the last CCA step, also change upon each weight update. Therefore, the last CCA step needs to adapt to the fast evolving input data distribution. On the other hand, if we are updating the CCA weights $(\U,\V)$ based on a small minibatch of data (as happens in stochastic optimization), it is intuitively wasteful to solve $(\U,\V)$ to optimality rather than to make a simple update based on the minibatch. Moreover, the objective of this ``simple update''  can be used to derive a gradient estimate for $(\W_\f, \W_\g)$.

In view of Algorithm~\ref{alg:cca-iterative}, it is a natural choice to embed the optimization of $(\f, \g)$ into the iterative solution to linear CCA. Instead of solving the regression problem $\F \rightarrow \B_{t}$ exactly with $\A_t \leftarrow \B_{t} \F^\top \left( \F \F^\top \right)^{-1} \F$, we try to solve the problem $\X \rightarrow \B_{t}$ on a minibatch with a gradient descent step on $(\W_\f, \U)$ jointly (recall $\F=\f(\X)$ is a function of $\W_\f$). Notice that this regression objective is unconstrained and decouples over training samples, so an unbiased gradient estimate for this problem can be easily derived through standard backpropagation using minibatches (however, this gradient estimate may not be unbiased for the original DCCA objective; see discussion in Section~\ref{s:related}).

The less trivial part of Algorithm~\ref{alg:cca-iterative} to implement in DCCA is the whitening operation $\left(\A_{t}\A_{t}^\top \right)^{-\frac{1}{2}} \A_t$, which needs $\A_t\in\bbR^{L\times N}$, the projections of all training samples. We would like to avoid the exact computation of $\A_t$ as it requires feeding forward the entire training set $\X$ with the updated $\W_{\tf}$, and the computational cost of this operation is as high as (half of) the cost of evaluating the batch gradient (the latter requires both the forward and backward passes). We bypass this difficulty by noting that the only portion of $\A_{t}$ needed is the updated projection of the minibatch used in the subsequent view 2 regression problem $\X \rightarrow \A_{t}$ (corresponding to the step $\B_{t+1} \leftarrow \A_t \G^\top \left( \G \G^\top \right)^{-1} \G$ in Algorithm~\ref{alg:cca-iterative}). Therefore, if we have an estimate of the covariance $\Stf^t:=\A_{t}\A_{t}^\top$ without feeding forward the entire training set, we can estimate the updated projection for this minibatch only. Specifically, we estimate this quantity by\footnote{We add a small value $\epsilon>0$ to the diagonal of the covariance estimates in our implementation for numerical stability.}
\begin{gather}\label{e:memory}
  \Stf^{t} \leftarrow \rho\Stf^{t-1} + (1-\rho) \frac{N}{\abs{b}} \tf(\X_b)\tf(\X_b)^\top,
\end{gather}
where $\rho\in[0,1]$, $\X_b$ denotes a minibatch of data with index set $b$, and $\abs{b}$ denotes the size (number of samples) of this minibatch. The time constant $\rho$ controls how much the previous covariance estimate is kept in the update; a larger $\rho$ indicates forgetting the ``memory'' more slowly. Assuming that the parameters do not change much from time $t-1$ to $t$, then $\Stf^{t-1}$ will be close to $\Stf^{t}$, and incorporating it helps to reduce the variance from the term $\tf(\X_b)\tf(\X_b)^\top$ when $\abs{b}\ll N$.
The update in \eqref{e:memory} has a form similar to that of the widely used momentum technique in the optimization~\cite{Polyak64a} and neural network literature~\cite{Sutskev_13a,Schaul_13a}, and is also used by \cite{Brand06a,SantosMilidiu10a,Yger_12a} for online subspace tracking and anomaly detection. We note that the memory cost of $\Stf^{t} \in \bbR^{L\times L}$ is small as we look for low-dimensional projections (small $L$) in practice. These advantages validate our choice of whitening operations over the more commonly used QR decomposition used by \cite{LuFoster14a}. 

\begin{algorithm}[t]
  \caption{Nonlinear orthogonal iterations (NOI) for DCCA.}
  \label{alg:dcca}
  \renewcommand{\algorithmicrequire}{\textbf{Input:}}
  \renewcommand{\algorithmicensure}{\textbf{Output:}}
  \begin{algorithmic}
    \REQUIRE Data matrix $\X\in \bbR^{D_x \times N}$, $\Y\in \bbR^{D_y \times N}$. Initialization $(\W_{\tf}, \W_{\tg})$, time constant $\rho$, learning rate $\eta$.
    \STATE Randomly choose a minibatch $(\X_{b_0},\Y_{b_0})$ 
    \STATE $\Stf \leftarrow \frac{N}{b_0}\sum_{i\in b_0} \tf(\x_i)\tf(\x_i)^\top$, 
    \STATE $\Stg \leftarrow \frac{N}{b_0}\sum_{i\in b_0} \tg(\y_i)\tg(\y_i)^\top$
    \FOR{$t=1,2,\dots,T$}
    \STATE Randomly choose a minibatch $(\X_{b_t},\Y_{b_t})$
    \STATE $\Stf \leftarrow \rho \Stf + (1-\rho) \frac{N}{\abs{b_t}}\sum_{i\in b_t} \tf(\x_i)\tf(\x_i)^\top$
    \STATE $\Stg \leftarrow \rho \Stg + (1-\rho) \frac{N}{\abs{b_t}}\sum_{i\in b_t} \tg(\y_i)\tg(\y_i)^\top$
    \STATE Compute the gradient $\partial \W_{\tf}$ of the objective 
    \begin{gather*}
      \min_{\W_{\tf}}\; \frac{1}{\abs{b_t}} \sum_{i\in b_t} \norm{\tf(\x_i) - \Stg^{-\frac{1}{2}}\tg(\y_i) }^2
    \end{gather*}
    \STATE Compute the gradient $\partial \W_{\tg}$ of the objective 
    \begin{gather*}
      \min_{\W_{\tg}}\; \frac{1}{\abs{b_t}} \sum_{i\in b_t} \norm{\tg(\y_i) - \Stf^{-\frac{1}{2}}\tf(\x_i) }^2
    \end{gather*}
    \STATE $\W_{\tf} \leftarrow \W_{\tf} - \eta \partial \W_{\tf}$, $\W_{\tg} \leftarrow \W_{\tg} - \eta \partial \W_{\tg}$.
    \ENDFOR  
    \ENSURE The updated $(\W_{\tf}, \W_{\tg})$.
  \end{algorithmic}
\end{algorithm}

We give the resulting nonlinear orthogonal iterations procedure (NOI) for DCCA in Algorithm~\ref{alg:dcca}. Now adaptive whitening is used to obtain suitable target outputs of the regression problems for computing derivatives $(\partial \W_{\tf}, \partial \W_{\tg})$, and we no longer maintain the whitened projections of the entire training set at each iteration.
Therefore, by the end of the algorithm, $(\tf(\X),\tg(\Y))$ may not satisfy the whitening constraints of \eqref{e:dcca}. One may use an additional CCA step on $(\tf(\X),\tg(\Y))$ to obtain a feasible solution of the original problem if desired, and this amounts to linear transforms in $\bbR^L$ which do not change the canonical correlations between the projections for both the training and test sets.
In practice, we adaptively estimate the mean of $\tf(\X)$ and $\tg(\Y)$ with an update formula similar to that of \eqref{e:memory} and center the samples accordingly before estimating the covariances and computing the target outputs. We also use momentum in the stochastic gradient steps for the nonlinear least squares problems as is commonly used in the deep learning community \cite{Sutskev_13a}. Overall, Algorithm~\ref{alg:dcca} is intuitively quite simple:  It alternates between adaptive covariance estimation/whitening and stochastic gradient steps over (a stochastic version of) the least squares objectives, without any involved gradient computation.

\section{Related Work}
\label{s:related}

Stochastic (and online) optimization techniques for fundamental problems, such as principal component analysis and partial least squares, are of continuous research interest~\cite{Krasul69a,OjaKarhun85a,WarmutKuzmin08a,Arora_12a,Arora_13a,Mitliag_13a,Balsub_13a,Shamir15a}. However, as pointed out by \cite{Arora_12a}, the CCA objective is more challenging due to the whitening constraints. 

Recently, \cite{Yger_12a} proposed an adaptive CCA algorithm with efficient online updates based on matrix manifolds defined by the whitening constraints. However, the goal of their algorithm is anomaly detection rather than optimizing the canonical correlation objective for a given dataset. 
Based on the alternating least squares formulation of CCA (Algorithm~\ref{alg:cca-iterative}), \cite{LuFoster14a} propose an iterative solution of CCA for very high-dimensional and sparse input features, and the key idea is to solve the high dimensional least squares problems with randomized PCA and (batch) gradient descent. 

\begin{algorithm}[t]
  \caption{CCA via gradient descent over least squares. }
  \label{alg:cca-gd}
  \renewcommand{\algorithmicrequire}{\textbf{Input:}}
  \renewcommand{\algorithmicensure}{\textbf{Output:}}
  \begin{algorithmic}
    \REQUIRE Data matrix $\F\in \bbR^{d_x \times N}$, $\G\in \bbR^{d_y \times N}$. Initialization ${\uu}_0 \in \bbR^{d_x}$, ${\vv}_0 \in \bbR^{d_y}$. Learning rate $\eta$.
    \FOR{$t=1,2,\dots,T$}
    \STATE ${\uu}_t \leftarrow {\uu}_{t-1} - \eta \F (\F^\top {\uu}_{t-1} - \frac{1}{\norm{\vv_{t-1}^\top \G}} \G^\top {\vv}_{t-1})$
    \STATE ${\vv}_t \leftarrow {\vv}_{t-1} - \eta \G (\G^\top {\vv}_{t-1} - \frac{1}{\norm{\uu_{t-1}^\top \F}} \F^\top {\uu}_{t-1})$
    \ENDFOR  
    \STATE $\uu \leftarrow \frac{{\uu}_T}{\norm{\uu_T^\top \F}}$,$\quad$$\vv \leftarrow \frac{{\vv}_T}{\norm{\vv_T^\top \G}}$
    \ENSURE $\uu$/$\vv$ are the CCA directions of view 1/2.
  \end{algorithmic}
\end{algorithm}

Upon the submission of this paper, we have become aware of the very recent publication of \cite{Ma_15b}, which extends \cite{LuFoster14a} by solving the linear least squares problems with (stochastic) gradient descent. We notice that a specical case of our algorithm ($\rho=0$) is equivalent to theirs for linear CCA. To see this, we give the linear CCA version of our algorithm (for a one-dimensional projection, to be consistent with the notation of \cite{Ma_15b}) in Algorithm~\ref{alg:cca-gd}, where we take a batch gradient descent step over the least squares objectives in each iteration. This algorithm is equivalent to Algorithm~3 of \cite{Ma_15b}.\footnote{Although Algorithm~3 of \cite{Ma_15b} maintains two copies---the normalized and the unnormalized versions---of the weight parameters, we observe that the sole purpose of the normalized version in the intermediate iterations is to provide whitened target output for the least squares problems; our version of the algorithm eliminates this copy and the normalized version can be retrieved by a whitening step at the end.} Though intuitively very simple, the analysis of this algorithm is challenging.  
In~\cite{Ma_15b} it is shown that the solution to the CCA objective is a fixed point of this algorithm, but no global convergence property is given. We also notice that the gradients used in this algorithm are derived from the alternating least squares problems
\begin{gather*}
\min_{\uu}\; \norm{ \uu^\top \F - \frac{\vv^\top \G}{\norm{\vv^\top \G}} }_F^2 \text{\ and \ } \min_{\vv}\; \norm{ \vv^\top \G - \frac{\uu^\top \F}{\norm{\uu^\top \F}} }_F^2,
\end{gather*}
while the true CCA objective can be written as
\begin{gather*}
\min_{\uu,\vv}\; \norm{ \frac{\uu^\top \F}{\norm{\uu^\top \F}} - \frac{\vv^\top \G}{\norm{\vv^\top \G}}}_F^2.
\end{gather*}
This shows that Algorithm~3 is \emph{not} implementing gradient descent over the CCA objective.

When extending Algorithm~3 to stochastic optimization, we observe the key differences between their algorithm and ours as follows. 
Due to the evolving $(\W_\f,\W_\g)$, the last CCA step in the DCCA model is dealing with different $(\f(\X),\g(\Y))$ and covariance structures in different iterates, even though the original inputs $(\X,\Y)$ are the same; this motivates the adaptive estimate of covariances in \eqref{e:memory}. In the whitening steps of \cite{Ma_15b}, however, the covariances are estimated  using \emph{only} the current minibatch at each iterate, without consideration of the remaining training samples or previous estimates, which corresponds to $\rho\rightarrow 0$ in our estimate. \cite{Ma_15b} also 
suggests using a minibatch size of the order $\calO(L)$, the dimensionality of the covariance matrices to be estimated, in order to obtain a high-accuracy estimate for whitening. As we will show in the experiments, in both CCA and DCCA, it is important to incorporate the previous covariance estimates ($\rho\rightarrow 1$) at each step to reduce the variance, especially when small minibatches are used. Based on the above analysis for batch gradient descent, solving the least squares problem with stochastic gradient descent is \emph{not} implementing stochastic gradient descent over the CCA objective. Nonetheless, as shown in the experiments, this stochastic approach works remarkably well and can match the performance of batch optimization, for both linear and nonlinear CCA, and is thus worth careful analysis.

Finally, we remark that other possible approaches for solving \eqref{e:dcca} exist. Since the difficulty lies in the whitening constraints, one can relax the constraints and solve the Lagrangian formulation repeatedly with updated Lagrangian multipliers, as done by \cite{LaiFyfe00a}; or one can introduce auxiliary variables and apply the quadratic penalty method \cite{NocedalWright06a}, as done by \cite{CarreirWang14b}. The advantage of such approaches is that there exists no coupling of all training samples when optimizing the primal variables (the DNN weight parameters) and thus one can easily apply SGD there, but one also needs to deal with the Lagrange multipliers or to set a schedule for the quadratic penalty parameter (which is non-trivial) and alternately optimize over two sets of variables repeatedly in order to obtain a solution of the original constrained problem.

\begin{table}[t]
  \centering
  \caption{Statistics of two real-world datasets.}
  \label{t:datasets}
  \begin{tabular}{|c||c|c|c|}
    \hline
    dataset & training/tuning/test & $L$ & DNN architectures  \\ \hline
    JW11    &  30K/11K/9K  &  112 & \caja{c}{c}{273-1800-1800-112\\112-1200-1200-112} \\ \hline
    MNIST   &  50K/10K/10K &  50  & \caja{c}{c}{392-800-800-50\\392-800-800-50} \\
    \hline
  \end{tabular}
\end{table}

\section{Experiments}
\label{s:experiments}

\subsection{Experimental setup}
We now demonstrate the NOI algorithm on the two real-world datasets used by \cite{Andrew_13a} when introducing DCCA. The first dataset is a subset of the University of Wisconsin X-Ray Microbeam corpus ~\cite{Westbur94a}, which consists of simultaneously recorded acoustic and articulatory measurements during speech. Following \cite{Andrew_13a,Wang_15a}, the acoustic view inputs are 39D Mel-frequency cepstral coefficients and the articulatory view inputs are horizontal/vertical displacement of 8 pellets attached to different parts of the vocal tract, each then concatenated over a 7-frame context window, for speaker `JW11'. The second dataset consists of left/right halves of the images in the MNIST dataset~\cite{Lecun_98a}, and so the input of each view consists of $28\times 14$ grayscale images. We do not tune neural network architectures as it is out of the scope of this paper. Instead, we use DNN architectures similar to those used by \cite{Andrew_13a} with ReLU activations~\cite{NairHinton10a}, and we achieve better generalization performance with these architectures mainly due to better optimization. The statistics of each dataset and the chosen DNN architectures (widths of input layer-hidden layers-output layer) are given in Table~\ref{t:datasets}. The projection dimensionality $L$ is set to 112/50 for JW11/MNIST respectively as in \cite{Andrew_13a}; these are also the maximum possible total canonical correlations for the two datasets. 

We compare three optimization approaches: full batch optimization by L-BFGS~\cite{Andrew_13a}, using the implementation of \cite{Schmid12a} which includes a good line-search procedure; stochastic optimization with large minibatches~\cite{Wang_15a}, denoted STOL; and our algorithm, denoted NOI. We create training/tuning/test splits for each dataset and measure the total canonical correlations on the test sets (measured by linear CCA on the projections) for different optimization methods. Hyperparameters of each algorithm, including $\rho$ for NOI, minibatch size $n=\abs{b_1}=\abs{b_2},\dots$, learning rate $\eta$ and momentum $\mu$ for both STOL and NOI, are chosen by grid search on the tuning set. All methods use the same random initialization for DNN weight parameters. We set the maximum number of iterations to $300$ for L-BFGS and number of epochs (one pass over the training set) to $50$ for STOL and NOI.

\begin{table*}[!t]\centering
  \caption{Total test set canonical correlation obtained by different algorithms.} 
  \label{t:corr}
  \begin{tabular}{|c|c|c|c|c|c|c|c|}\hline
    \multirow{2}{*}{dataset} & \multirow{2}{*}{L-BFGS} & \multicolumn{2}{|c}{STOL} & \multicolumn{4}{|c|}{NOI} \\ \cline{3-8}
    && $n=100$ & $n=500$ &  $n=10$ & $n=20$ & $n=50$ & $n=100$ \\ \hline
    JW11    & 78.7   & 33.0         & 86.7         & 83.6       & 86.9       & 87.9       & 89.1 \\ \hline
    MNIST   & 47.0   & 26.1         & 47.0         & 45.9       & 46.4       & 46.4       & 46.4 \\ \hline
  \end{tabular}
\end{table*}

\begin{figure}[t]
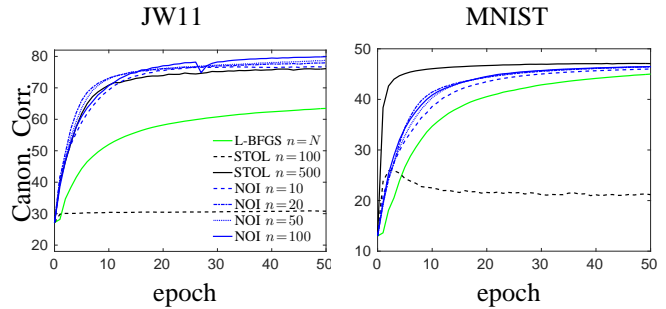

  \centering
  \begin{tabular}{@{}c@{\hspace{0.03\linewidth}}c@{}}
    JW11 & MNIST \\[.5ex]
    \psfrag{corr}[][]{Canon. Corr.}
    \psfrag{iteration}[t][]{epoch}
\psfrag{LBFGS n=N}[l][l][0.52]{L-BFGS $n\!=\!N$}
\psfrag{STOL n=100}[l][l][0.55]{STOL $n\!=\!100$}
\psfrag{STOL n=500}[l][l][0.55]{STOL $n\!=\!500$}
\psfrag{NOI n=10}[l][l][0.55]{NOI $n\!=\!10$}
\psfrag{NOI n=20}[l][l][0.55]{NOI $n\!=\!20$}
\psfrag{NOI n=50}[l][l][0.55]{NOI $n\!=\!50$}
\psfrag{NOI n=100}[l][l][0.55]{NOI $n\!=\!100$}
    \includegraphics[width=0.50\linewidth]{JW11_varyb.eps} &
    \psfrag{iteration}[t][]{epoch}
    \includegraphics[width=0.47\linewidth]{MNIST_varyb.eps}
  \end{tabular}
  \caption{Learning curves of different algorithms on tuning sets with different minibatch size $n$.}
  \label{f:varyn}
\end{figure}

\subsection{Effect of minibatch size $n$}
In the first set of experiments, we vary the minibatch size $n$ of NOI over $\{10,20,50,100\}$, while tuning $\rho$, $\eta$ and $\mu$. Learning curves (objective value vs.~number of epochs) on the tuning set for each $n$ with the corresponding optimal hyperparameters are shown in Fig.~\ref{f:varyn}. For comparison, we also show the learning curves of STOL with $n=100$ and $n=500$, while $\eta$ and $\mu$ are also tuned by grid search. We observe that STOL performs very well at $n=500$ (with the performance on MNIST being somewhat better due to higher data redundancy), but it can not achieve much progress in the objective over the random initialization with $n=100$, for the reasons described earlier. In contrast, NOI achieves very competitive performance with various small minibatch sizes, with fast improvement in objective during the first few iterations, although larger $n$ tends to achieve slightly higher correlation on tuning/test sets eventually. Total canonical correlations on the test sets are given in Table~\ref{t:corr}, showing that we achieve better results than \cite{Andrew_13a} with similar DNN architectures.

\subsection{Effect of time constant $\rho$}

\begin{figure}[t]
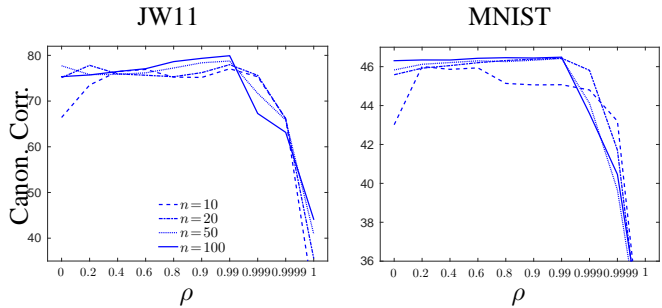

  \centering
    \psfrag{0}[][][.47]{$0$}
    \psfrag{0.2}[][][.47]{$0.2$}
    \psfrag{0.4}[][][.47]{$0.4$}
    \psfrag{0.6}[][][.47]{$0.6$}
    \psfrag{0.8}[][][.47]{$0.8$}
    \psfrag{0.9}[][][.47]{$0.9$}
    \psfrag{0.99}[][][.47]{$0.99$}
    \psfrag{0.999}[][][.47]{$\, 0.999$}
    \psfrag{0.9999}[][][.47]{$\quad 0.9999$}
    \psfrag{1}[][][.47]{$\; 1$}
  \begin{tabular}{@{}c@{\hspace{0.05\linewidth}}c@{}}
    JW11 & MNIST \\[1ex]
    \psfrag{corr}[][]{Canon. Corr.}
    \psfrag{rho}[t][]{$\rho$}
    \psfrag{n=10}[l][l][0.55]{$n\!=\!10$}
    \psfrag{n=20}[l][l][0.55]{$n\!=\!20$}
    \psfrag{n=50}[l][l][0.55]{$n\!=\!50$}
    \psfrag{n=100}[l][l][0.55]{$n\!=\!100$}
    \includegraphics[width=0.49\linewidth]{JW11_varyr.eps} &
    \psfrag{rho}[t][]{$\rho$}
    \includegraphics[width=0.46\linewidth]{MNIST_varyr.eps}
  \end{tabular}
  \caption{Total correlation achieved by NOI on tuning sets with different $\rho$.}
  \label{f:varyr}
\end{figure}

In the second set of experiments, we demonstrate the importance of $\rho$ in NOI for different minibatch sizes. The total canonical correlations achieved by NOI on the tuning set for $\rho=\{0,\, 0.2,\, 0.4,\, 0.6,\, 0.8,\, 0.9,\, 0.99,\, 0.999,\, 0.9999\}$ are shown in Fig.~\ref{f:varyr}, while other hyper-parameters are set to their optimal values. We confirm that for relatively large $n$, NOI works reasonably well with $\rho=0$ (so we are using the same covariance estimate/whitening as \cite{Ma_15b}). But also as expected, when $n$ is small, it is beneficial to incorporate the previous estimate of the covariance because the covariance information contained in each small minibatch is noisy. Also, as $\rho$ becomes too close to $1$, the covariance estimates are not adapted to the DNN outputs and the performance of NOI degrades.  Moreover, we observe that the optimal $\rho$ value seems different for each $n$.

\begin{figure}[t]
  \centering
   \psfrag{0}[][][.7]{$0$}
    \psfrag{0.2}[][][.7]{$0.2$}
    \psfrag{0.4}[][][.7]{$0.4$}
    \psfrag{0.6}[][][.7]{$0.6$}
    \psfrag{0.8}[][][.7]{$0.8$}
    \psfrag{0.9}[][][.7]{$0.9$}
    \psfrag{0.99}[][][.7]{$0.99$}
    \psfrag{0.999}[][][.7]{$\, 0.999$}
    \psfrag{0.9999}[][][.7]{$\quad 0.9999$}
  \psfrag{corr}[][]{Canon. Corr.}
  \psfrag{rho}[][]{$\rho$}
  \psfrag{Initialization}[l][l][0.8]{Random Init.}
  \psfrag{SVD}[l][l][0.8]{SVD}
  \psfrag{STOL n=500}[l][l][0.8]{STOL $n\!=\!500$}
  \psfrag{NOI n=1}[l][l][0.8]{NOI $n\!=\!1$}
  \includegraphics[width=0.8\linewidth]{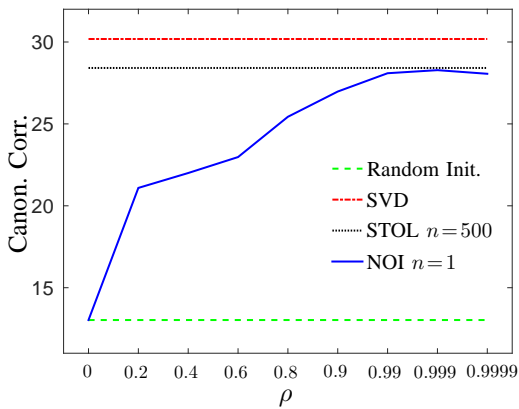}
  \caption{Pure stochastic optimization of linear CCA using NOI. We show total correlation achieved by NOI with $n=1$ on the MNIST training sets at different $\rho$, by the random initialization used by NOI, by the exact solution, and by STOL with $n=500$.}
  \label{f:cca-noi}
\end{figure} 

\subsection{Pure stochastic optimization for CCA}

Finally, we carry out pure stochastic optimization ($n=1$) for linear CCA on the MNIST dataset. Notice that linear CCA is a special case of DCCA with $(\tf,\tg)$ both being single-layer linear networks (although we have used small weight-decay terms for the weights, leading to a slightly different objective than that of CCA). Total canonical correlations achieved by STOL with $n=500$ and by NOI (50 training epochs) on the training set with different $\rho$ values are shown in Fig.~\ref{f:cca-noi}. The objective of the random initialization and the closed-form solution (by SVD) are also shown for comparison. NOI could not improve over the random initialization without memory ($\rho=0$, corresponding to the algorithm of \cite{Ma_15b}), but gets very close to the optimal solution and matches the objective obtained by the previous large minibatch approach when $\rho\rightarrow 1$. This result demonstrates the importance of our adaptive estimate \eqref{e:memory} also for CCA.

\section{Conclusions}
\label{s:conclusion}

In this paper, we have proposed a stochastic optimization algorithm NOI for training DCCA which updates the DNN weights based on small minibatches and performs competitively to previous optimizers.

One direction for future work is to better understand the convergence properties of NOI, which presents several difficulties. First, we note that convergence of the alternating least squares formulation of CCA (Algorithm~\ref{alg:cca-iterative}, or rather orthogonal iterations) is usually stated as the angle between the estimated subspace and the ground-truth subspace converging to zero. In the stochastic optimization setting, we need to relate this measure of progress (or some other measure) to the nonlinear least squares problems we are trying to solve in the NOI iterations. As discussed in Section~\ref{s:related}, even the convergence 
of the linear CCA version of NOI with batch gradient descent is not well understood~\cite{Ma_15b}. Second, the use of memory in estimating covariances \eqref{e:memory} complicates the analysis and ideally we would like to come up with ways of determining the time constant $\rho$. 

We have also tried using the same form of adaptive covariance estimates in both views for the STOL approach for computing the gradients \eqref{e:gradient}, but its performance with small minibatches is much worse than that of NOI. Presumably this is because the gradient computation of STOL suffers from noise in both views which are further combined through various nonlinear operations, whereas the noise in the gradient computation of NOI only comes from the output target (due to inexact whitening), and as a result NOI is more tolerant to the noise resulting from using small minibatches.  This deserves further analysis as well.

\bibliographystyle{IEEEtran}
\bibliography{allerton15a}

\end{document}